\documentclass[10pt,twocolumn,letterpaper]{article}

\usepackage{cvpr}              
\usepackage[most]{tcolorbox}
\usepackage{cprotect}
\tcbuselibrary{skins,breakable,theorems}
\usepackage{amsmath, amssymb, amsfonts, amsthm, mathtools, bm, bbm}
\usepackage{enumitem}
\usepackage{subcaption}
\usepackage{algorithm}   
\usepackage{listings}    
\usepackage[table]{xcolor}
\usepackage[noend]{algpseudocode}

\DeclareMathOperator*{\argmin}{arg\,min}
\newcommand{\code}[1]{\texttt{#1}}

\usepackage{bibunits}                       
\defaultbibliographystyle{ieeenat_fullname}
\defaultbibliography{main}

\newtheorem{definition}{Definition}

\newtheorem{theorem}{Theorem}
\newtheorem{lemma}{Lemma}

\DeclareMathOperator{\supp}{supp}

\DeclarePairedDelimiter{\norm}{\lVert}{\rVert}

\DeclarePairedDelimiterX{\inner}[2]{\langle}{\rangle}{#1,\,#2}

\definecolor{cvprblue}{rgb}{0.21,0.49,0.74}
\usepackage[pagebackref,breaklinks,colorlinks,allcolors=cvprblue]{hyperref}

\def\paperID{18246} 
\def\confName{CVPR}
\def\confYear{2026}

\title{Rethinking Model Selection in VLM Through the Lens \\of Gromov-Wasserstein Distance}

\author{%
Muyang Li$^{1}$ \enspace
Yucheng Liu$^{2}$ \enspace
Jianbo Ma$^{2}$\\
Elliot Osborne$^{2}$ \enspace
Bo Han$^{3}$ \enspace
Tongliang Liu$^{1}$\thanks{Corresponding Author}\\
$^{1}$Sydney AI Centre, The University of Sydney\\
$^{2}$Dolby Laboratories \quad $^{3}$TMLR Group, Hong Kong Baptist University\\
{\ttfamily\small \{muyang.li,tongliang.liu\}@sydney.edu.au}\\
{\ttfamily\small \{yucheng.liu,jianbo.ma,elliot.osborne\}@dolby.com}\\
{\ttfamily\small bhanml@comp.hkbu.edu.hk}%
}

\begin{document}
\maketitle
\begin{abstract}
Vision-Language Models (VLMs) have enhanced traditional LLMs with visual capabilities through the integration of vision encoders. While recent works have explored various combinations of vision encoders and LLMs, there still lacks a principled understanding of what makes a vision encoder suitable for VLM alignment. In this paper, we systematically investigate this question via comprehensive experiments on a curated collection of 18 pre-trained vision encoders from diverse sources. We first demonstrate that common practices, such as choosing encoders with the largest size or highest zero-shot accuracy, consistently fail to identify optimal models. In fact, these metrics show only weak to moderate correlation with VLM performance. This intriguing finding begs a fundamental question: \textit{What factors of vision-encoders matter in VLM?} Through comprehensive analysis, we identify that the \textit{\textbf{structural similarity}} across modalities plays a crucial but previously overlooked role in vision-encoder selection, which we measure using the Gromov-Wasserstein distance as a proxy. From a theoretical perspective, we show that the learnability of cross-modality mapping can be \textbf{provably} associated with the Gromov-Wasserstein distance. Empirical verification on 60+ full VLM training runs shows that our proposed inference-only metric performs significantly better than alternative model selection strategies and exhibits a much stronger correlation with final VLM performance, thereby enabling efficient and effective prediction of VLM performance before full training.
\end{abstract}
\vspace{-7mm}
\section{Introduction}
\label{sec:intro}

Vision-language models (VLMs) have become an indispensable component of modern AI systems, ranging from state-of-the-art proprietary AI systems such as ChatGPT-5, to open-sourced popular models such as LLaVa~\citep{liu2023visual}, Cambrian~\citep{tong2024cambrian}, etc. The dominant paradigm for current state-of-the-art (SOTA) VLMs ~\citep{bai2025qwen2,tong2024cambrian,deitke2025molmo,li2024llavaonevision,xue2024xgen,zheng2025chainoffocus} can be depicted by the seminal work of Visual Instruction Tuning~\citep{liu2023visual}, which is typically characterized as a ``pre-training then fine-tuning" process: a projector is trained to semantically align the representation spaces of the vision encoder and LLM, and subsequently, the model undergoes joint full-parameter fine-tuning of both towers for a cohesive multi-modal understanding and instruction-following capabilities.

Despite the widespread success and the adoption of this simple paradigm, many questions remain unclear in this process. Specifically, prior studies have explored numerous combinations of LLM and vision-encoders~\citep{tong2024cambrian,cocchi2025llava}, and a mismatch between the model size and performance after fine-tuning has been observed~\citep{li2024bigger}. As a foundational step, this paper presents a holistic evaluation of 18 state-of-the-art (SOTA) vision encoders from diverse sources, by performing complete visual instruction tuning~\cite{liu2024improved} loop on all of them. We show that traditional wisdom such as selecting the model with the largest size or highest zero-shot accuracy consistently fails to identify best choice of vision encoders. Moreover, our correlation analysis suggests that there is even no statistically meaningful correlation between the capability of vision encoders and final VLM performances. This intriguing observation begs the question of: 
\begin{tcolorbox}[enhanced,attach boxed title to top center={yshift=-0mm,yshifttext=-1mm},
colback=gray!5!white,colframe=gray!75!black,colbacktitle=red!80!black,
  title=,fonttitle=\bfseries,
  boxed title style={size=small,colframe=red!50!black} ]
  \emph{\quad What makes a good vision encoder for VLM?}
\end{tcolorbox}
\vspace{-1mm}

\begin{figure*}[t]                 
  \centering
  \begin{subfigure}[t]{0.33\textwidth}
  \includegraphics[width=\linewidth]{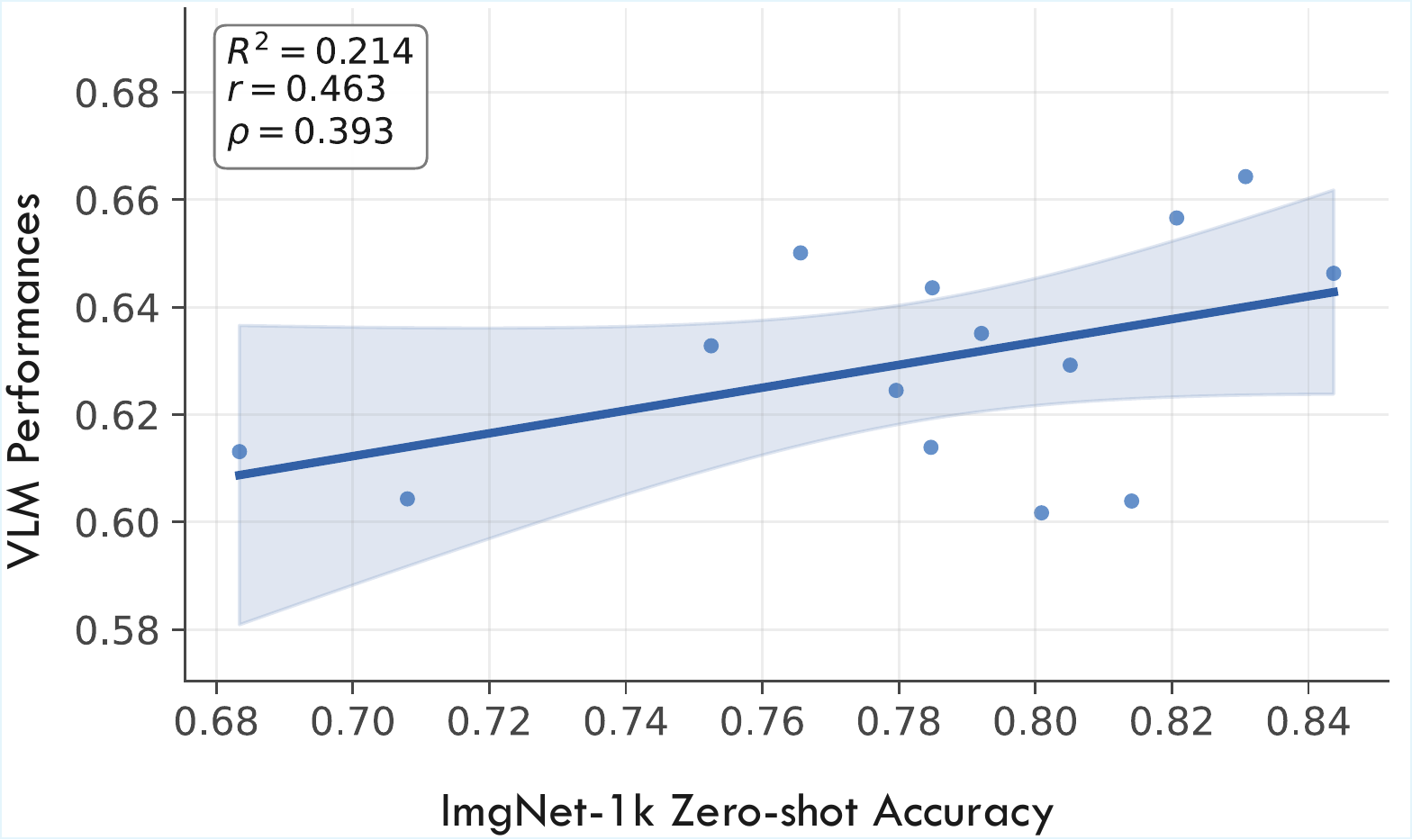}
  \end{subfigure}
  \begin{subfigure}[t]{0.33\textwidth}
  \includegraphics[width=\linewidth]{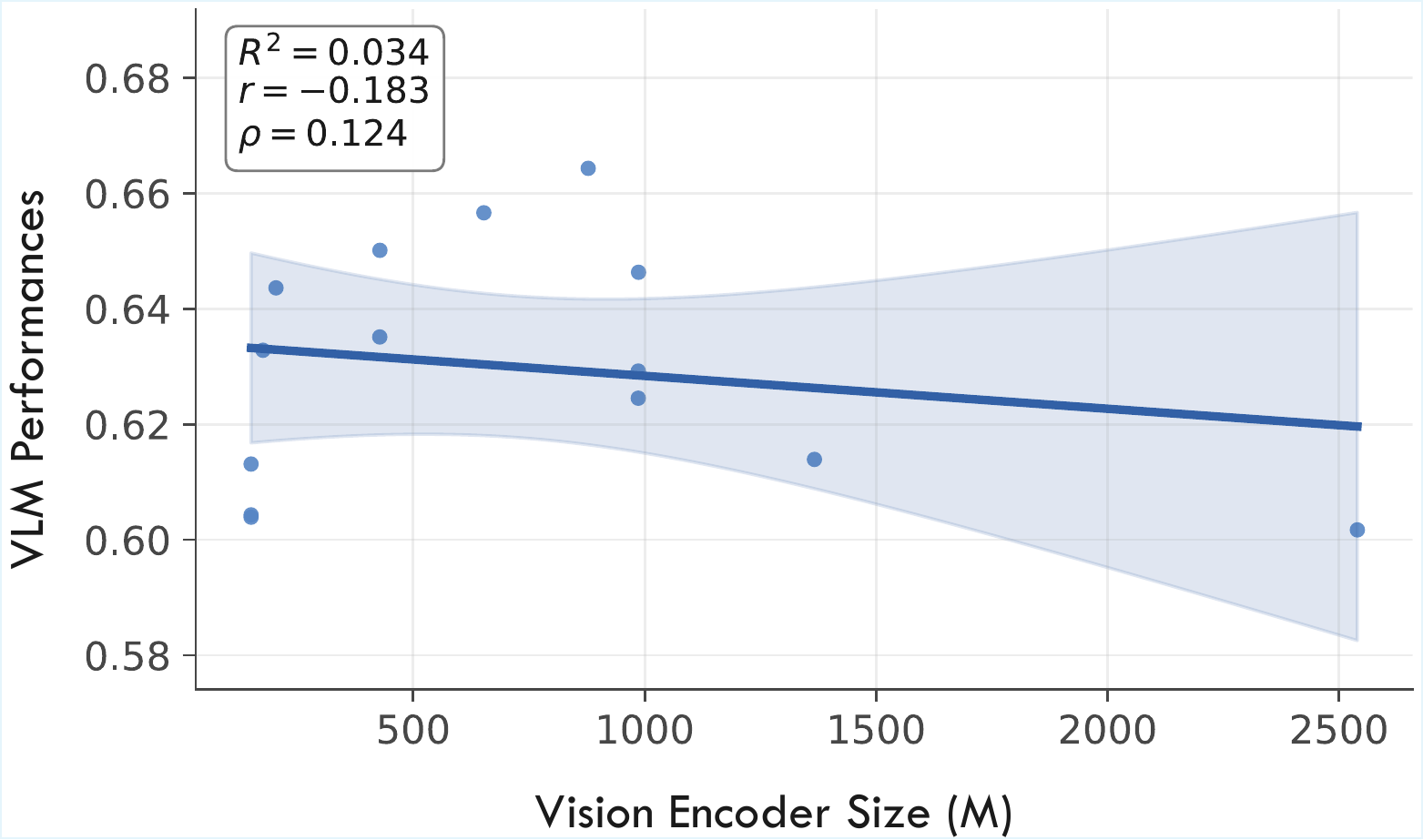}
  \end{subfigure}
  \begin{subfigure}[t]{0.33\textwidth}
  \includegraphics[width=\linewidth]{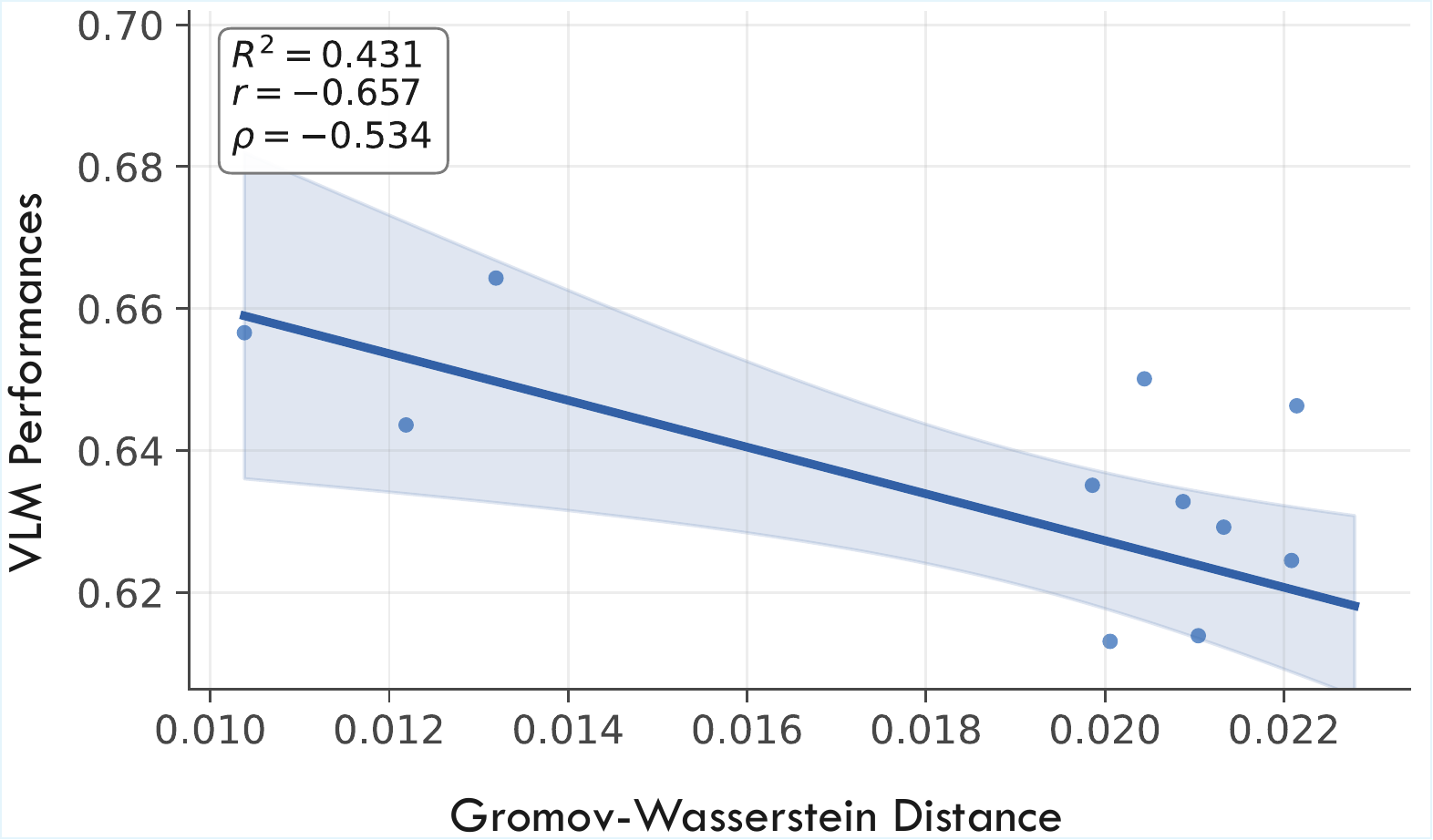}
  \end{subfigure}
  \label{fig:motivation}
  \vspace{-5mm}
  \caption{From left to right, we can see the correlation analysis of zero-shot classification accuracy, vision encoder size, and GW distance against the final VLM performances, respectively. $r$: Pearson correlation, $\rho$: Spearman correlation, R$^2$: R-squared coefficient. As we can see, the correlation between GW distance and final VLM performances is notably higher than naive baselines.}
  \vspace{-3mm}
\end{figure*}

In this paper, we aim to gain a holistic understanding to this question: if the performances of pre-trained vision encoders cannot explain the performances of VLM after visual instruction tuning, what could be the factors that are actually crucial? We begin with a simple intuition: beyond raw visual capability, there exists a notion of \textbf{\emph{compatibility}} between the vision encoder and the LLM. Depending on the pre-training data, model architecture, and training strategy, the induced visual representations can be more or less heterogeneous with respect to the LLM’s representation space, directly affecting the difficulty of aligning them. We hypothesize that a key aspect of this compatibility is the \textbf{\emph{structural similarity}} between visual and textual representations, which we formalize as an appropriate similarity measure between their induced distributions. When a vision encoder produces representations that are structurally incompatible with those of the LLM (i.e., the distance between the visual and textual representation distributions is large), the resulting VLM may perform poorly after fine-tuning, even if the encoder itself exhibits strong standalone visual capability.

However, one challenge emerging from the attempt to compute distributional discrepancy between different modalities is that: distribution of different modality representations resides in different metric-measure space (mm-space). That is: (i) their measures are different, since the measurable function that generates those representations are different, in other words, naive distance of representations from different modalities have no canonical meaning; (ii) their metric function are different, for example, representations from different modalities have different dimension, which means that no unified metric function can naively compute their distance. Motivated by this, we seek the notion of Gromov-Wasserstein (GW) distance~\cite{memoli2011gromov} as a proxy to measure the distance between distributions from different mm-spaces (modalities). Instead of calculating the inter-distribution distance, GW distance measures how similar two mm-spaces are by finding a correspondence between their points that minimizes the distortion of pairwise distances. In other words, it compares structural similarity up to isometry without needing a common ambient space.

We summarize our contributions as follows: (i) we systematically study the problem of pre-trained vision encoder selection in VLM training, unveiling the fact that there currently lacks a plausible explanatory variable to correlate the factors of vision encoder to final VLM performances; (ii) we unveil the crucial factor in the success of VLM - the distributional compatibility between vision and text representations, which we instantiate
 through GW distance; (iii) we prove that the upper-bound of Lipschitz constant of the Bayes optimal hypothesis of cross-modality projector is associated with the $\infty$-norm GW distance, justifying that, if vision and text representations are of small GW distance, then the optimal mapping between them can essentially be learned with simpler hypothesis; (iv) through comprehensive experimental verifications with 60+ full VLM training runs, our proposed metrics show consistent performance gains compared to baselines, and showing strong Pearson correlation to the final VLM performances, whereas baselines' correlations are only weak to moderate.
\vspace{-3mm}
\section{Related Works}

\textbf{Gromov-Wasserstein Distance.} The Gromov-Wasserstein (GW) distance~\cite{memoli2011gromov} was proposed to quantify the similarity between different spaces, with its application in Graph Matching~\cite{xu2019gromov}, Flow Matching Diffusion~\cite{klein2024genot}, Saliency Detection~\cite{zhang2021deepacg}, etc. Essentially, in areas where we wish to measure the similarity across heterogeneous domains, GW distance could be useful since it is invariant to choice of metric function or measure equipped with that domain. Subsequent methodological improvements over vanilla GW distance including Entropic Gromov-Wasserstein distance~\cite{peyre2016gromov}, which applied entropic penalization to transport plan for more smoothness, etc.

\noindent\textbf{The Platonic Representation Hypothesis.} A notable existing work relevant to our study is the Platonic Representation Hypothesis~\cite{huh2024platonic}. This hypothesis stipulates that as the size of vision encoders and LLMs scales, their representations across modalities effectively converge. This implies that, despite being trained with significantly different objective functions, data, and hypothesis classes, different models eventually acquire similar understandings of concepts located in different modalities. In fact, \cite{huh2024platonic} proposed a metric for estimating this alignment score called Mutual Nearest Neighbors (MutualNN). Conceptually, MutualNN is similar to the Gromov-Wasserstein (GW) distance in that both techniques avoid estimating cross-modality representations directly. Instead, both compute within-modality distances first and then compare these intra-space metrics across domains. Specifically, MutualNN estimates the overlap of the local neighborhoods, whereas GW first solves for an optimal coupling between the given representations and then calculates the total discrepancy between the intra-domain pairwise distances weighted by this optimal transport plan.
A more comprehensive discussion of related works can be found in the Appendix~\ref{app:relate}.

\section{Gromov-Wasserstein for Model Selection}
\label{sec:method}
In this section, we present our main methodology - using Gromov-Wasserstein (GW) distance as a proxy to measure the structural similarity across modalities. Our intuition is that, if two mm-spaces are more structurally similar, then the concepts encoded in respective modalities are easier to be reconciled into a unified space, and the resulting VLM will be more capable of jointly understanding the multi-modal representations. 
\vspace{-5mm}
\paragraph{Preliminary.} We begin with a formulation of our problem at hand, considering a collection of candidate vision encoder $\mathcal{V}=\{V_{1},\cdots,V_{m}\}$, and a target LLM $\mathcal{F}$. The objective is to determine which vision encoder is likely to produce the best VLM with the given LLM, without actually fine-tuning every possible combination of candidate vision encoders and the LLM, since the computational cost for doing so will be forbidden. Formally, let $\bm{X}=\{\bm{x_i}\}_{i=1}^{n}\in\mathbb{R}^{k\times n}$ be the collection of encoded image representations, and $\bm{Y}=\{\bm{y_i}\}_{i=1}^{n}\in\mathbb{R}^{h\times n}$ be the collection of encoded text representations by LLM. 
Note that we have a pool of different vision encoders, each generating a unique $\bm{X}$. 
Moreover, define metric-measure-spaces (mm-spaces) $\mathcal{X} = (\bm{X}, d_{\mathcal{X}}, \mu_{\mathcal{X}})$ and $\mathcal{Y} = (\bm{Y}, d_{\mathcal{Y}}, \mu_{\mathcal{Y}})$, $d_{\mathcal{X}}, d_{\mathcal{Y}}$ are metric functions, and $\mu_{\mathcal{X}}, \mu_{\mathcal{Y}}$ are Borel measures on $\bm{X}$ and $\bm{Y}$, respectively. We fix $d_{\mathcal{X}}, d_{\mathcal{Y}}$ as angular distance, s.t. $d_{\mathcal{X}}(\bm{x},\bm{x}') =\cos^{-1}\left(\frac{\bm{x}\cdot \bm{x}'}{\Vert\bm{x}\Vert\cdot\Vert\bm{x}'\Vert}\right), \bm{x}, \bm{x}' \in \bm{X}$ and $d_{\mathcal{Y}}(\bm{y},\bm{y}') =\cos^{-1}\left(\frac{\bm{y}\cdot \bm{y}'}{\Vert\bm{y}\Vert\cdot\Vert\bm{y}'\Vert}\right), \bm{y}, \bm{y}' \in \bm{Y}$. Finally, $\pi\in\mathbb{R}^{n\times n}_{+}$ is the transport plan between $\bm{X}$ and $\bm{Y}$.

\begin{figure}[t]                 
  \centering
  \includegraphics[width=\linewidth]{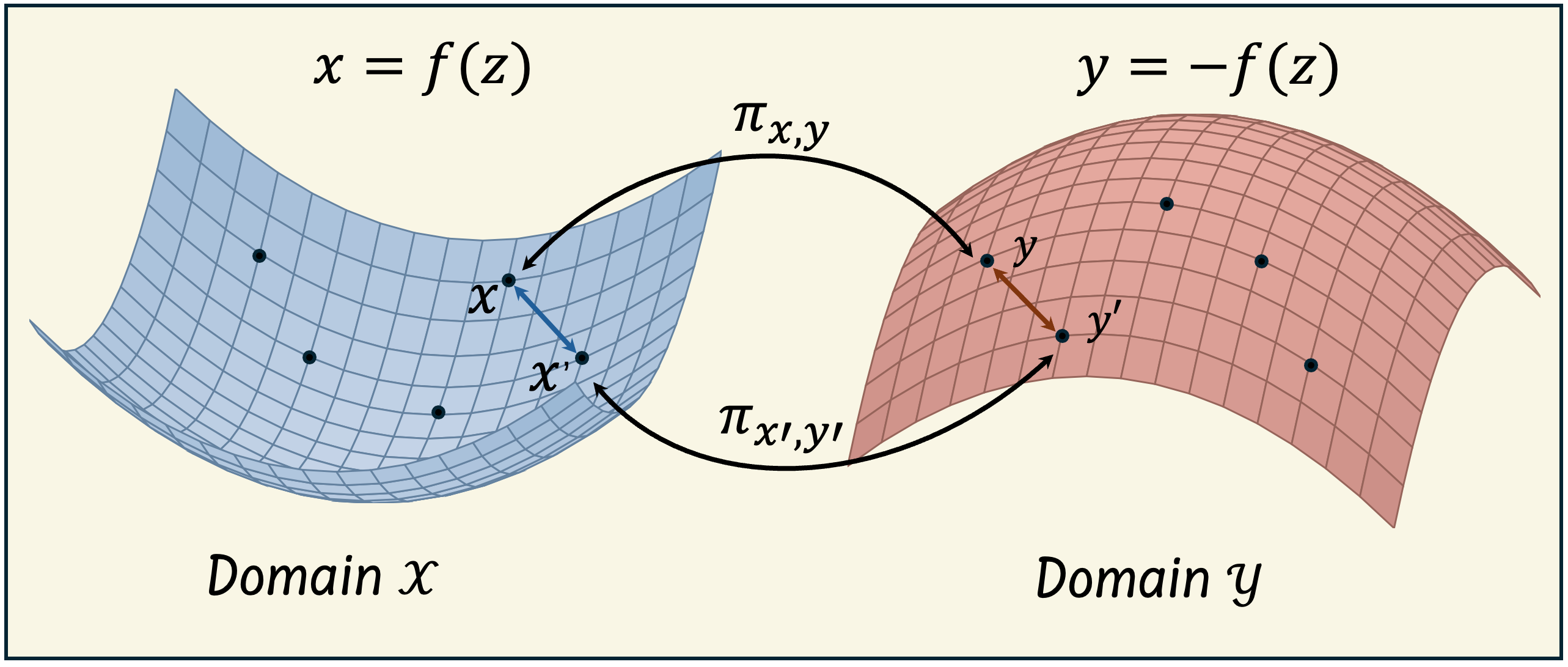}
  \caption{A toy example to show the intuition of GW distance: while it is clear that space $\mathcal{X}$ and $\mathcal{Y}$ are identical up to isometry, direct distance comparison fails to reflect such similarity. However, GW distance can capture such structural similarity by comparing the within-space geometries.}
  \label{fig:GW}
  \vspace{-4mm}
\end{figure}

\paragraph{Gromov-Wasserstein Distance.} As we mentioned above, naively measuring the distance between two different mm-spaces is ill-posed. 
Consider the toy example shown in Figure~\ref{fig:GW}, where we have two different spaces, generated by a function $f(\cdot)$ and its reflection. While it is clear that these two spaces are highly similar, in fact, identical up to isometry, naively computing the absolute distances for matching instances will result in large pairwise distance, falsely suggesting they are actually extremely different. The GW distance, however, can capture such nuanced structural similarity. Instead of comparing the absolute positions of points, GW distance compares their internal relationships. More specifically, given the optimal correspondence between the instances in both spaces, GW distance measures how well the pairwise distances within each space align. In other words, if you pick two points $(\bm{x}, \bm{x}')$ in space $\mathcal{X}$, and their corresponding matched points $(\bm{y}, \bm{y}')$ in $\mathcal{Y}$, GW distance checks if the distance $d_{\mathcal{X}}(\bm{x}, \bm{x}')$ is the same as the distance $d_{\mathcal{Y}}(\bm{y}, \bm{y}')$. It calculates the difference between these internal distances for all pairs, this structural mismatch, for a given correspondence $\pi$, is formally termed the Gromov-Wasserstein discrepancy~\cite{peyre2016gromov}:
\begin{align}\label{equ:distortion}
    \mathcal{E}(\pi) = \sum_{\substack{\bm{x},\bm{x}'\bm{y},\bm{y}'}}\mathcal{L}\left(d_{\mathcal{X}}(\bm{x},\bm{x}'),d_{\mathcal{Y}}(\bm{y},\bm{y}')\right)\pi_{\bm{x},\bm{y}}\pi_{\bm{x}',\bm{y}'},
\end{align}
where $\mathcal{L}$ is the choice of penalty function for pairwise distances. In this paper we will use $\ell$-1 distance for $\mathcal{L}$. Denoting by $\Pi$ the collection of possible couplings, we define the GW distance as:
\begin{align}\label{equ:GW_distance}
    \text{GW} := \inf_{\pi\in\Pi}\mathcal{E}(\pi).
\end{align}

\paragraph{Scale-matching Across Modalities.} The most common choices of penalty function $\mathcal{L}$ are either $\ell$-1 distance or $\ell$-2 distance~\cite{memoli2011gromov,peyre2016gromov,bunne2019learning}, which are sensitive to the inherent differences in unit scale at different domains. To obtain meaningful estimation of GW distance, we first unify the distance metrics in different domains to the same unit scale. Specifically, we use median-ratio matching for normalization, where we fit an affine transformation between the pairwise distance matrix of visual modality, such that the median of its off-diagonal values matches the counterparts in text modality, or vice versa. By doing so, we are making sure the pairwise distance matrices from different modalities are in the same unit scale, while making no harm to the semantic information as the transformation is strictly affine.
\begin{align}\label{equ:median}
    s := \frac{\text{med}(\mathcal{D}_{\bm{Y}})}{\text{med}(\mathcal{D}_{\bm{X}})}, \quad \bar{\mathcal{D}}_{\bm{X}} := s\cdot\mathcal{D}_{\bm{X}},
\end{align}
where $\mathcal{D}_{\bm{X}}, \mathcal{D}_{\bm{Y}}$ are the pairwise distance matrices for $\bm{X}, \bm{Y}$ respectively, and $med$ refers to median function that returns the median of the off-diagonal values of the input matrix. $\bar{\mathcal{D}}_{\bm{X}}$ will be the actual pairwise distance matrix we use when computing the GW distance.

\paragraph{Solving $\pi$ via Optimal Transport.} As we discussed earlier, computing any GW-based distance requires solving for the optimal transport plan across spaces that minimizes the distortion term in~\eqref{equ:distortion}. That is, we need to solve 
\begin{align}
    \pi^* := \argmin_{\pi \in \Pi} \mathcal{E}(\pi).
\end{align}
We follow the optimization procedure of~\cite{peyre2016gromov}, which treats the GW objective as a quadratic functional over the transport polytope and solves it via a conditional-gradient scheme. Starting from an initialized coupling $\pi^{0} \in \Pi$, we iteratively update the transport plan as follows. At iteration $t$, we first compute the gradient of the GW objective with respect to the current coupling:
\begin{align}\label{equ:gradient}
    C^{t} = \nabla_{\pi} \mathcal{E}\big(\pi^{t}\big),
\end{align}
and solve the linear OT subproblem:
\begin{align}
    \tilde{\pi}^{t} 
    \;:=\; 
    \argmin_{\pi \in \Pi} \,\langle C^{t}, \pi \rangle,
\end{align}
which is a standard linear OT problem with cost matrix $C^{t}$. Finally, we update the coupling by:
\begin{align}\label{equ:update:coupling}
    \pi^{t+1} \;=\; (1 - \eta_t)\,\pi^{t} + \eta_t\,\tilde{\pi}^{t},
\end{align}
where $\eta_t \in (0,1]$ is the step size at $t$. Under mild regularity assumptions, this conditional gradient scheme on the compact convex transport polytope converges to a stationary point of the GW objective~\cite{peyre2016gromov}.

\noindent\textbf{Model Selection Algorithm.} Here, we present a unified workflow for using GW distance as a model-selection metric, summarized in Algorithm~\ref{alg:GW}, details the computation of the GW-based compatibility score, enabling us to identify strong VLM backbones in a fully training-free manner.

\begin{algorithm}[h!]
  \caption{GW distance for vision encoder selection}
  \label{alg:GW}
  \begin{algorithmic}[1] 
    
    \Require $\mathcal{V}$, collection of vision encoders; \texttt{LLM}, base LLM for VLM
    
    \State GW\_distances $=$ []
    \For{\texttt{vision\_encoder} in $\mathcal{V}$}
      \State $\bm{X} = \texttt{vision\_encoder}(\text{image})$
      \State $\bm{Y} = \texttt{LLM}(\text{text})$
      \State $\mathcal{D}_{\mathcal{X}} = \texttt{pairwise\_distance}(\bm{X})$
      \State $\mathcal{D}_{\mathcal{Y}} = \texttt{pairwise\_distance}(\bm{Y})$
      \State \Comment{Do median matching normalization in ~\ref{equ:median}}
      \State $\bar{\mathcal{D}}_{\mathcal{X}} = \texttt{median\_matching}(\mathcal{D}_{\mathcal{X}},\mathcal{D}_{\mathcal{Y}})$
      \State \Comment{Compute optimal transport plan $\pi^*$ by Equ.\ref{equ:gradient} to \ref{equ:update:coupling}}
      \State $\pi^* = \texttt{optimal\_transport}(\bar{\mathcal{D}}_{\mathcal{X}},\mathcal{D}_{\mathcal{Y}})$
      \State \Comment{Compute GW distance using Equ.\ref{equ:distortion}}
      \State $\text{GW} = \texttt{GW\_compute}(\pi^*,\bar{\mathcal{D}}_{\mathcal{X}},\mathcal{D}_{\mathcal{Y}})$
      \State $\texttt{append}(\text{GW\_distances}, \text{GW})$
    \EndFor
    \State optimal\_model = $\argmin(\text{GW\_distances})$
    \State \Return optimal\_model
    
  \end{algorithmic}
\end{algorithm}

\section{Theoretical Analysis}

In this section, we provide a theoretical analysis to shed light on the role of structural similarity in cross-modal alignment learning. In particular, we formalize how the Gromov-Wasserstein (GW) distance is related to the learnability of cross-modal mappings. 

\begin{definition}[Bayes-optimal realizability]
    Define the `bayes-optimal realizability' as the cross-modality matching error induced by the bayes-optimal hypothesis $g^*$ over the population $\mathcal{D}$:
    \begin{align}
        \mathcal{R^*}(\mathcal{D}) = \inf_{g} \mathbb{E}_{(\bm{x},\bm{y})\sim\mathcal{D}} d_{\mathcal{Y}}(g(\bm{x}),\bm{y}).
    \end{align}
\end{definition}
Our definition of ``realizability" is a direct adaption of Definition 3 from \cite{lu2023theory}, where the original definition is defined w.r.t. finite-sample and specific choice of pre-defined hypothesis class. Whereas in our case, we are concerned with the realizability of the underlying optimal mapping across modalities, namely $g^*$. The original definition from \cite{lu2023theory} is provided in Appendix \ref{app:theory:def} for completeness.

\begin{definition}[$\infty$-norm GW-distance, Equ. 5.8 \cite{memoli2011gromov}]\label{def:GW_inf}
    \begin{align}
        \text{GW}_{\infty} = \inf_{\pi\in\Pi} \sup_{\substack{(\bm{x},\bm{y}),(\bm{x}',\bm{y}')\\ \in\supp(\pi)}} \left|d_{\mathcal{X}}(\bm{x},\bm{x}') - d_{\mathcal{Y}}(\bm{y},\bm{y}')\right|.
    \end{align}
\end{definition}
By definition, the $\infty$-norm of the GW distance measures the image-text pairs with the maximum gap of distortion under optimal coupling. We are only concerned with the correspondence with non-zero transporting mass (over the support of the transport plan).

\begin{definition}[Worst-case Bayes error]\label{def:bayes}
Defining $\rho^*_{\pi} \in [0,\infty)$ as the worst-case error of the $g^*$, evaluated under the correspondence induced by $\pi$ as
    \begin{align}
        \rho^*_{\pi} := \sup_{(\bm{x},\bm{y})\in\supp(\pi)}d_{\mathcal{Y}}(g^*(\bm{x}),\bm{y}).
    \end{align}
\end{definition}

\begin{theorem}\label{theo:main}
Let $S_{\mathcal{X}}:= \{\bm{x}:(\bm{x}, \bm{y}) \in \supp(\pi^*_\infty)\}$ and $r := \inf\{d_{\mathcal{X}}(\bm{x}, \bm{x}'): \bm{x}, \bm{x}' \in S_{\mathcal{X}},\, \bm{x} \neq \bm{x}'\}$.
Then $g^*$ is $L_{g^*}$-Lipschitz on $S_{\mathcal{X}}$ with:
\begin{align}
    L_{g^*} \leq 1 + r^{-1}(2\rho^{*}_{\pi_{\infty}}+\text{GW}_{\infty}).
\end{align}
\end{theorem}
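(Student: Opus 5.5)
The plan is a triangle-inequality argument on the target side, using the coupling $\pi^*_\infty$ that attains $\text{GW}_{\infty}$ in Definition~\ref{def:GW_inf}. I assume the infimum is attained; otherwise I would take a near-optimal $\pi$ and pass to the limit. Fix two distinct points $\bm{x},\bm{x}'\in S_{\mathcal{X}}$. By definition of $S_{\mathcal{X}}$ there are $\bm{y},\bm{y}'$ with $(\bm{x},\bm{y}),(\bm{x}',\bm{y}')\in\supp(\pi^*_\infty)$. Since $d_{\mathcal{Y}}$ is a metric, I would write
\begin{align*}
d_{\mathcal{Y}}(g^*(\bm{x}),g^*(\bm{x}')) \le d_{\mathcal{Y}}(g^*(\bm{x}),\bm{y}) + d_{\mathcal{Y}}(\bm{y},\bm{y}') + d_{\mathcal{Y}}(\bm{y}',g^*(\bm{x}')).
\end{align*}

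Each term is then bounded separately. The two outer terms are each at most $\rho^*_{\pi_\infty}$ by Definition~\ref{def:bayes}, since both pairs lie in $\supp(\pi^*_\infty)$. For the middle term, the supremum in $\text{GW}_\infty$ is realized by the optimal plan, so $|d_{\mathcal{X}}(\bm{x},\bm{x}')-d_{\mathcal{Y}}(\bm{y},\bm{y}')|\le \text{GW}_\infty$, and hence $d_{\mathcal{Y}}(\bm{y},\bm{y}')\le d_{\mathcal{X}}(\bm{x},\bm{x}')+\text{GW}_\infty$. Combining the three bounds gives
\begin{align*}
d_{\mathcal{Y}}(g^*(\bm{x}),g^*(\bm{x}')) \le d_{\mathcal{X}}(\bm{x},\bm{x}') + 2\rho^*_{\pi_\infty} + \text{GW}_\infty.
\end{align*}

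To turn this additive bound into a Lipschitz constant, I would divide by $d_{\mathcal{X}}(\bm{x},\bm{x}')>0$ and use the separation $d_{\mathcal{X}}(\bm{x},\bm{x}')\ge r$, which gives a ratio of at most $1+r^{-1}(2\rho^*_{\pi_\infty}+\text{GW}_\infty)$. Taking the supremum over all pairs yields the claimed $L_{g^*}$. If $\bm{x}=\bm{x}'$ the Lipschitz inequality holds trivially, since $g^*$ is a function and both sides vanish.

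The main obstacle is not the algebra but the degenerate cases and the bookkeeping. If $r=0$ the bound is vacuous; I would state it as $+\infty$ and note that it is informative only when $r>0$, for example for a finite sample as in the empirical setting. I also need to check that one point $\bm{x}$ may be paired with several $\bm{y}$: the argument works for any choice of partners, so no selection issue arises. Finally, if the median-ratio rescaling of Eq.~\eqref{equ:median} is used, $d_{\mathcal{X}}$ in the statement should be read as the rescaled metric, which changes nothing in the argument.
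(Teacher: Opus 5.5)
Your proposal is correct and follows the paper's proof. The paper's Case~1 of Lemma~\ref{lemma:GW} is exactly your three-term triangle inequality, bounded by $d_{\mathcal{X}}(\bm{x},\bm{x}')+\text{GW}_{\infty}+2\rho^*_{\pi_\infty}$, followed by the same division by $d_{\mathcal{X}}(\bm{x},\bm{x}')\ge r$; the paper also proves the reverse inequality (Case~2), which the Lipschitz bound never uses, so omitting it loses nothing.
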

The proof of Theorem~\ref{theo:main} is deferred to Appendix~\ref{app:theory:proof_theo1}. Theorem~\ref{theo:main} shows that the Lipschitz constant of the Bayes-optimal cross-modal projector $g^*$ is bounded by the GW$_\infty$ distance between modalities. In other words, when the underlying mm-spaces are more compatible in the GW sense, the Bayes-optimal projector can be realized by a hypothesis with smaller Lipschitz constant, i.e., lower functional complexity. This provides a formal justification for the intuition that GW distance is linked to the learnability of cross-modality mappings. Moreover, since PAC-style generalization bounds for modern neural networks depend explicitly on the Lipschitz constant (or spectral norm) of the predictor~\cite{neyshabur2017pac,li2024towards,bartlett2017spectrally}, our bound on $L_{g^*}$ can be directly incorporated into these results to yield GW-dependent generalization guarantees.

\section{Experiments}
\label{sec:exp}
\begin{table*}[t]
\centering
\scriptsize
\setlength\tabcolsep{3pt}
\resizebox{\textwidth}{!}{
\begin{tabular}{l|c|cccccccccc}
\toprule
Method & Average & TextVQA & ScienceQA & GQA & AI2D & MMMU & MME-C & MME-P & SEED-I & POPE & RealWorldQA \\
\midrule
Worst & $60.17$ & $52.35$ & $81.44$ & $57.12$ & $62.66$ & $43.22$ & $297.50$ & $1361.82$ & $62.15$ & $85.68$ & $51.76$ \\
Accuracy & $\underline{64.63}$ & $\underline{57.53}$ & $80.97$ & $\underline{62.99}$ & $64.90$ & $42.33$ & $365.00$ & $\mathbf{1559.15}$ & $\underline{69.29}$ & $\underline{86.76}$ & $\underline{57.91}$ \\
RSA & $62.92$ & $53.64$ & $\underline{81.63}$ & $61.05$ & $\underline{65.06}$ & $\underline{43.67}$ & $335.00$ & $1510.80$ & $66.28$ & $85.58$ & $54.90$ \\
CCA & $61.39$ & $51.43$ & $79.65$ & $59.76$ & $64.80$ & $41.56$ & $\underline{368.21}$ & $1404.15$ & $64.49$ & $84.11$ & $51.90$ \\
MutualNN & $\underline{64.63}$ & $\underline{57.53}$ & $80.97$ & $\underline{62.99}$ & $64.90$ & $42.33$ & $365.00$ & $\mathbf{1559.15}$ & $\underline{69.29}$ & $\underline{86.76}$ & $\underline{57.91}$ \\
\midrule
\textbf{GW} & $\mathbf{66.43}$ & $\mathbf{62.34}$ & $\mathbf{82.10}$ & $\mathbf{64.05}$ & $\mathbf{67.10}$ & $\mathbf{44.56}$ & $\mathbf{387.86}$ & $\underline{1549.55}$ & $\mathbf{70.84}$ & $\mathbf{87.50}$ & $\mathbf{59.87}$ \\
\midrule
\textcolor{gray}{Optimal} & \textcolor{gray}{$66.43$} & \textcolor{gray}{$62.34$} & \textcolor{gray}{$82.10$} & \textcolor{gray}{$64.05$} & \textcolor{gray}{$67.10$} & \textcolor{gray}{$44.56$} & \textcolor{gray}{$387.86$} & \textcolor{gray}{$1549.55$} & \textcolor{gray}{$70.84$} & \textcolor{gray}{$87.50$} & \textcolor{gray}{$59.87$}\\
\bottomrule
\end{tabular}}
    \caption{Model selection results for Qwen-2.5-7B-Instruct, ``Large" group.}
\label{tab:main_qwen_large}
\end{table*}

\begin{table*}[t]
\centering
\scriptsize
\setlength\tabcolsep{3pt}
\resizebox{\textwidth}{!}{
\begin{tabular}{l|c|cccccccccc}
\toprule
Method & Average & TextVQA & ScienceQA & GQA & AI2D & MMMU & MME-C & MME-P & SEED-I & POPE & RealWorldQA \\
\midrule
Worst & $60.39$ & $50.38$ & $\underline{80.26}$ & $59.37$ & $63.50$ & $42.67$ & $323.57$ & $1350.75$ & $63.77$ & $84.21$ & $51.76$ \\
Accuracy & $\mathbf{64.36}$ & $\mathbf{57.64}$ & $\mathbf{80.52}$ & $\mathbf{62.79}$ & $\mathbf{65.28}$ & $\underline{42.78}$ & $\underline{371.07}$ & $\mathbf{1524.66}$ & $\mathbf{69.03}$ & $\mathbf{86.60}$ & $\mathbf{56.34}$ \\
RSA & $61.31$ & $51.02$ & $79.75$ & $60.45$ & $63.12$ & $\mathbf{43.00}$ & $335.00$ & $1367.21$ & $64.73$ & $84.83$ & $54.12$ \\
CCA & $\underline{63.51}$ & $\underline{54.63}$ & $79.77$ & $\underline{61.35}$ & $\underline{64.51}$ & $41.44$ & $\mathbf{395.36}$ & $\underline{1504.17}$ & $\underline{67.36}$ & $\underline{85.87}$ & $\underline{55.56}$ \\
MutualNN & $61.31$ & $51.02$ & $79.75$ & $60.45$ & $63.12$ & $\mathbf{43.00}$ & $335.00$ & $1367.21$ & $64.73$ & $84.83$ & $54.12$ \\
\midrule
\textbf{GW} & $\mathbf{64.36}$ & $\mathbf{57.64}$ & $\mathbf{80.52}$ & $\mathbf{62.79}$ & $\mathbf{65.28}$ & $\underline{42.78}$ & $\underline{371.07}$ & $\mathbf{1524.66}$ & $\mathbf{69.03}$ & $\mathbf{86.60}$ & $\mathbf{56.34}$ \\
\midrule
\textcolor{gray}{Optimal} & \textcolor{gray}{$64.36$} & \textcolor{gray}{$57.64$} & \textcolor{gray}{$80.52$} & \textcolor{gray}{$62.79$} & \textcolor{gray}{$65.28$} & \textcolor{gray}{$42.78$} & \textcolor{gray}{$371.07$} & \textcolor{gray}{$1524.66$} & \textcolor{gray}{$69.03$} & \textcolor{gray}{$86.60$} & \textcolor{gray}{$56.34$}\\
\bottomrule
\end{tabular}}
    \caption{Model selection results for Qwen-2.5-7B-Instruct, ``Small" group.}
\label{tab:main_qwen_small}
\end{table*}
In previous sections, we have provided a high-level intuitive understanding of why measuring the structural similarity across different modalities can be useful, together with theoretical justifications. In this section, we aim to provide comprehensive experiments to verify our insights from a practical perspective.

\subsection{Experimental setup}

\noindent\textbf{Model pool.} We consider 18 different SOTA vision encoders from diverse sources, summarized in Appendix~\ref{app:exp_model_pool}. In our main experiments, we will divide those vision encoders into two groups, namely \textbf{Small} and \textbf{Large}, and performing model selection in each group separately. The rationale for such practice is that the model selection will only be a challenging issue, if there is no clear gap in models' capability, e.g. size or accuracy. Specifically, vision encoders with less than 500M parameters will be classified as \textbf{Small} group and others will be classified as \textbf{Large}.

\noindent\textbf{Datasets.} We follow the original training recipe of LLaVA-1.5~\citep{liu2024improved}. For the pre-training phase, we use 595K image-text pairs sampled from LAION~\cite{schuhmann2022laion}, Conceptual Caption~\cite{changpinyo2021conceptual} and SBU~\cite{ordonez2011im2text}, that are re-captioned by BLIP~\citep{li2022blip}. For visual instruction tuning phase, we use 665K image-text pairs from the original LLaVA-1.5 paper~\citep{liu2024improved}. To comprehensively evaluate the performances of VLMs in various aspects, from generic question answering, optical character recognition, to general knowledge injection, etc. We adapt 9 popular benchmarks for evaluation purposes, namely TextVQA~\cite{singh2019towards}, ScienceQA~\cite{lu2022learn}, GQA~\cite{hudson2019gqa}, AI2D~\cite{kembhavi2016diagram}, MMMU~\cite{yue2024mmmu}, MME~\cite{fu2025mme}, SEED~\cite{li2023seed}, POPE~\cite{li2023evaluating}, and RealWorldQA~\cite{realworldqa}.

\begin{table*}[t]
\centering
\scriptsize
\setlength\tabcolsep{3pt}
\resizebox{\textwidth}{!}{
\begin{tabular}{l|c|cccccccccc}
\toprule
Method & Average & TextVQA & ScienceQA & GQA & AI2D & MMMU & MME-C & MME-P & SEED-I & POPE & RealWorldQA \\
\midrule
Worst & $57.04$ & $43.46$ & $74.02$ & $59.44$ & $54.37$ & $37.22$ & $267.14$ & $1350.19$ & $63.99$ & $86.64$ & $50.33$ \\
Accuracy & $\underline{60.67}$ & $\underline{53.68}$ & $\underline{78.28}$ & $\mathbf{61.26}$ & $\mathbf{58.16}$ & $\mathbf{39.33}$ & $291.79$ & $\mathbf{1428.32}$ & $\underline{67.68}$ & $\underline{86.98}$ & $\underline{53.46}$ \\
RSA & $58.30$ & $49.20$ & $70.90$ & $\underline{60.11}$ & $56.54$ & $\underline{38.44}$ & $\underline{313.57}$ & $1395.39$ & $65.17$ & $84.67$ & $49.02$ \\
CCA & $\mathbf{61.38}$ & $\mathbf{54.48}$ & $\mathbf{79.39}$ & $58.36$ & $\mathbf{58.16}$ & $38.33$ & $\mathbf{336.43}$ & $\underline{1421.90}$ & $\mathbf{68.81}$ & $\mathbf{87.22}$ & $\mathbf{55.95}$ \\
MutualNN & $57.98$ & $50.18$ & $76.23$ & $58.10$ & $55.12$ & $37.11$ & $307.86$ & $1368.02$ & $64.52$ & $84.94$ & $46.67$ \\
\midrule
\textbf{GW} & $\mathbf{61.38}$ & $\mathbf{54.48}$ & $\mathbf{79.39}$ & $58.36$ & $\mathbf{58.16}$ & $38.33$ & $\mathbf{336.43}$ & $\underline{1421.90}$ & $\mathbf{68.81}$ & $\mathbf{87.22}$ & $\mathbf{55.95}$ \\
\midrule
\textcolor{gray}{Optimal} & \textcolor{gray}{$61.38$} & \textcolor{gray}{$54.48$} & \textcolor{gray}{$79.39$} & \textcolor{gray}{$58.36$} & \textcolor{gray}{$58.16$} & \textcolor{gray}{$38.33$} & \textcolor{gray}{$336.43$} & \textcolor{gray}{$1421.90$} & \textcolor{gray}{$68.81$} & \textcolor{gray}{$87.22$} & \textcolor{gray}{$55.95$}\\
\bottomrule
\end{tabular}}
\caption{Vision encoder selection results for Llama-3.1-8B-Instruct, “Large” group.}
\label{tab:main_llama_large}
\end{table*}

\begin{table*}[t]
\centering
\scriptsize
\setlength\tabcolsep{3pt}
\resizebox{\textwidth}{!}{
\begin{tabular}{l|c|cccccccccc}
\toprule
Method & Average & TextVQA & ScienceQA & GQA & AI2D & MMMU & MME-C & MME-P & SEED-I & POPE & RealWorldQA \\
\midrule
Worst & $54.78$ &$40.29$ & $71.04$ & $52.15$ & $49.97$ & $\underline{37.33}$ & $280.00$ & $1359.32$ & $59.94$ & $85.76$ & $48.37$ \\
Accuracy & $\mathbf{61.31}$ & $\mathbf{56.11}$ & $\mathbf{79.18}$ & $\mathbf{61.86}$ & $\mathbf{58.39}$ & $37.00$ & $\mathbf{301.43}$ & $\underline{1460.43}$ & $\mathbf{67.98}$ & $\mathbf{86.99}$ & $\underline{54.90}$ \\
RSA & $56.19$ & $47.07$ & $73.99$ & $57.52$ & $54.24$ & $36.44$ & $280.00$ & $1281.19$ & $61.07$ & $83.50$ & $49.02$ \\
CCA & $\underline{61.14}$ & $\underline{53.84}$ & $\underline{78.24}$ & $\underline{61.28}$ & $\underline{57.32}$ & $\mathbf{39.78}$ & $\underline{290.36}$ & $\mathbf{1516.23}$ & $\underline{66.06}$ & $\underline{86.79}$ & $\mathbf{55.95}$ \\
MutualNN & $56.19$ & $47.07$ & $73.99$ & $57.52$ & $54.24$ & $36.44$ & $280.00$ & $1281.19$ & $61.07$ & $83.50$ & $49.02$ \\
\midrule
\textbf{GW} & $\mathbf{61.31}$ & $\mathbf{56.11}$ & $\mathbf{79.18}$ & $\mathbf{61.86}$ & $\mathbf{58.39}$ & $37.00$ & $\mathbf{301.43}$ & $\underline{1460.43}$ & $\mathbf{67.98}$ & $\mathbf{86.99}$ & $\underline{54.90}$ \\
\midrule
\textcolor{gray}{Optimal} & \textcolor{gray}{$61.31$} & \textcolor{gray}{$56.11$} & \textcolor{gray}{$79.18$} & \textcolor{gray}{$61.86$} & \textcolor{gray}{$58.39$} & \textcolor{gray}{$37.00$} & \textcolor{gray}{$301.43$} & \textcolor{gray}{$1460.43$} & \textcolor{gray}{$67.98$} & \textcolor{gray}{$86.99$} & \textcolor{gray}{$54.90$}\\
\bottomrule
\end{tabular}}
\caption{Vision encoder selection results for Llama-3.1-8B-Instruct, “Small” group.}
\label{tab:main_llama_small}
\end{table*}

\noindent\textbf{Baselines.} When it comes to selecting vision encoders for VLM training, conventional wisdom usually suggests choosing the model with the highest single-modal performances, which can be mostly represented by the zero-shot \textbf{\textit{Accuracy}}~\citep{lin2024selecting}. Therefore, Accuracy can serve as a naive baseline for vision encoder selection. Also, when it comes to measuring the similarity of heterogeneous spaces, there are also existing heuristics for such purposes, namely representational similarity analysis \textbf{\textit{RSA}}~\cite{kriegeskorte2008representational}, and canonical correlation analysis \textbf{\textit{CCA}}~\cite{morcos2018insights}. Specifically, RSA measures the correlation of pairwise distance matrices from different spaces, which also captures the structural similarity across spaces, however, RSA assumes fixed hard 1-1 correspondence across spaces, making it less flexible comparing to GW distance. As for CCA, it tries to find a pair of linear projections, such that they maximizes the correlation of representations from different space, while this can mitigate the mismatch metric function problem in different space, since CCA unifies them into a joint space, it cannot capture the structural similarity of difference spaces. In addition, \citet{huh2024platonic} also uses the mutual nearest-neighbor metric to measure the level of implicit alignment between visual and textual pre-trained models, in this study, we will also consider it as a baseline and referred as \textbf{\textit{MutualNN}} thereafter. A more complete discussion and implementation details of baselines can be found in the Appendix. Lastly, following \cite{perez2021true}, to showcase the necessity of model selection and the maximum possible gain for employing model selection techniques, we also include the performances of \textbf{\textit{Worst}} model, which is the vision encoder that leads to the worst average performances on 9 benchmarks.

\noindent\textbf{Training configurations.} By default, we adapt the official codebase from LLaVA-NeXT, and follows the training recipe of LLaVA-1.5~\cite{liu2024improved} by a two-stage training process. The first training stage involves pre-training a MLP feature projector, where the learning rate is set to be $2e-3$, batch size is set to be $256$, we use cosine learning rate scheduler and the warm-up ratio is $0.03$. For the second training stage, where we unlocks the parameters of pre-trained vision encoder and LLM for full supervised fine-tuning, at this stage, we set the learning rate to be $2e-5$, batch size to be $128$, cosine learning rate scheduler with warm-up ratio of $0.03$. All VLMs are trained with 8 x GH200. More detailed training configurations can be found in the Appendix~\ref{app:training_config}.

\noindent\textbf{Implementation details.} For the GW estimation, we random sample 1,000 image-text pairs from the alignment dataset. For all vision encoders, we use their \verb|CLS| token from the last layer as the visual feature representation, for all LLM, we use their second-to-last layer encoded hidden representation as textual feature. The default training iterations of the optimal transport optimization is set to be 1,000. For the implementation of GW distance solver, we use the Python Optimal Transport package~\cite{flamary2021pot}.

\subsection{Main Results}
To make sure our results do not rely on specific choice of LLM, we use both \verb|QWen-2.5-7B-Instruct| and  \verb|LLaMa-3.1-8B-Instruct| as base LLM. In each of the following table, we show the ``Optimal" model selected based on the highest average score across 9 benchmarks, highlighting whether any model selection metrics successfully end up with the optimal model, and what is the gap between the selected model and the possible upper-bounds is. For MME benchmark, where the score is not in 0-100 scale, we follow common community practice to divide the MME(Cog.) by 800 and divide MME(Percep.) by 2,000 when calculating the average score~\cite{tong2024cambrian}. For all methods summarized in tables, best performing methods are \textbf{bold} and next best performing methods are \underline{underlined}.

\begin{table}[htbp]
\centering
\begin{tabular}{l|ccc}
\toprule
Method & $|r|$ & $|\rho|$ & R$^2$ \\
\midrule
Accuracy  & $0.4629$ & $0.3934$ & $0.2142$ \\
RSA       & $0.4759$ & $0.4330$ & $0.2265$ \\
CCA       & $0.0430$ & $0.0072$ & $0.0018$ \\
MutualNN  & $0.6081$ & $0.1780$ & $0.3697$ \\
\midrule
\textbf{Ours} & $\mathbf{0.6568}$ & $\mathbf{0.5341}$ & $\mathbf{0.4314}$ \\
\bottomrule
\end{tabular}
\caption{Correlation analysis of different model selection metrics against final VLM performances with Qwen-2.5-7B-Instruct as base model. $|r|$: Absolute Pearson correlation, $\rho$: Absolute Spearman correlation, R$^2$: R-squared coefficient.}
\vspace{-5mm}
\label{tab:correlation}
\end{table}

\noindent\textbf{Qwen-2.5-7B-Instruct.} We summarize the results of using \verb|Qwen-2.5-7B-Instruct| as base model in Table~\ref{tab:main_qwen_large} and Table~\ref{tab:main_qwen_small}. As we can observe from Table~\ref{tab:main_qwen_large}, using GW distance for model selections significantly outperforms other alternatives in nearly all benchmarks, where only in MME (Perception) metrics, our selected model performs slightly worse than best baselines. We note that, for the ``Large" vision encoder category, both zero-shot accuracy and MutualNN favors \verb|DFN5B-ViT-H-378|, whereas GW distance reflects that, \verb|Siglip-SO400M-384|, actually has the highest geometrical similarity to LLM, despite the less favorable zero-shot accuracy score. 

As for the ``Small" group, where results are summarized in Table~\ref{tab:main_qwen_small}, we can see our proposed metrics ties with zero-shot accuracy baseline, while consistently outperforms other baselines. Notably, for zero-shot accuracy baseline, it does not yield as desirable performance as our method in the ``Large" group, suggesting such metrics cannot consistently correlates to final performances across different model group, consequently, if we consider the combined scenerio where all models are in the same model pool, zero-shot accuracy will performs significantly worse than GW.

\noindent\textbf{LLaMa-3.1-8B-Instruct.} We summarize the results of using \verb|LLaMa-3.1-8B-Instruct| as base LLM in Table~\ref{tab:main_llama_large} and Table~\ref{tab:main_llama_small}. Comparing to \verb|Qwen-2.5-7B-Instruct|, the performance of CCA is notably better, ending up selecting the optimal vision encoders in the ``Large" category. In the ``Large" category, both CCA and GW ends up with \verb|siglip-so400m-384|.  However, it fails to identify optimal model in ``Small" group. Also, we should note that, while CCA can select the best or near best vision encoder in these cases, it fails to show sensible correlation to the VLM performances, and performs notably worse in previous cases, suggesting that its observed success is likely to be an outlier here. 

\noindent\textbf{Correlation Analysis.} We present a detailed correlation analysis comparing our proposed method against baseline metrics in Table~\ref{tab:correlation}. Since zero-shot accuracy cannot be straightforwardly computed for certain non-CLIP vision encoders and is not publicly available for \verb|MLCD-ViT-bigG-336|~\cite{an2024multi}, we exclude these models and restrict the analysis to 14 vision encoders. As shown in the table, GW distance is the only metric that achieves both Pearson and Spearman correlations with final VLM performance above $0.5$; moreover, a simple linear fit between GW distance and final VLM performance explains $43\%$ of the variance, substantially outperforming other baselines. Among the baselines, MutualNN attains a reasonably strong Pearson correlation ($>0.5$), but its Spearman and R$^2$ values are considerably weaker. Furthermore, the observed correlation is inversely directed relative to the expected behavior: the correlation between MutualNN score and final VLM performance is negative, whereas a higher MutualNN score should in principle correspond to better VLM performance~\cite{huh2024platonic}. This discrepancy suggests a spurious high correlation rather than a meaningful one.

\subsection{Additional Analysis}

\noindent\textbf{Efficiency Analysis.} Here we present the runtime scaling trend of GW estimation, and the runtime of competitive baseline CCA~\cite{morcos2018insights}. All measured runtime are calculated on a single GH200 GPU/CPU, summarized in Figure~\ref{fig:efficien}. As we can see from the figure, both GW estimation and CCA calculations are pure training-free and inference-only computations, comparing to full model training, which takes approx. 8.5 hours using 8 x GH200 GPUs (68 GPU hours), under default setup (1,000 image-text pairs), GW only takes about ~1 minute to compute. In addition, we can see that the runtime of GW scales linearly as sample size increase, suggesting good scalability with more image-text pairs.

\begin{figure}[t]
  \centering
  \includegraphics[width=\linewidth]{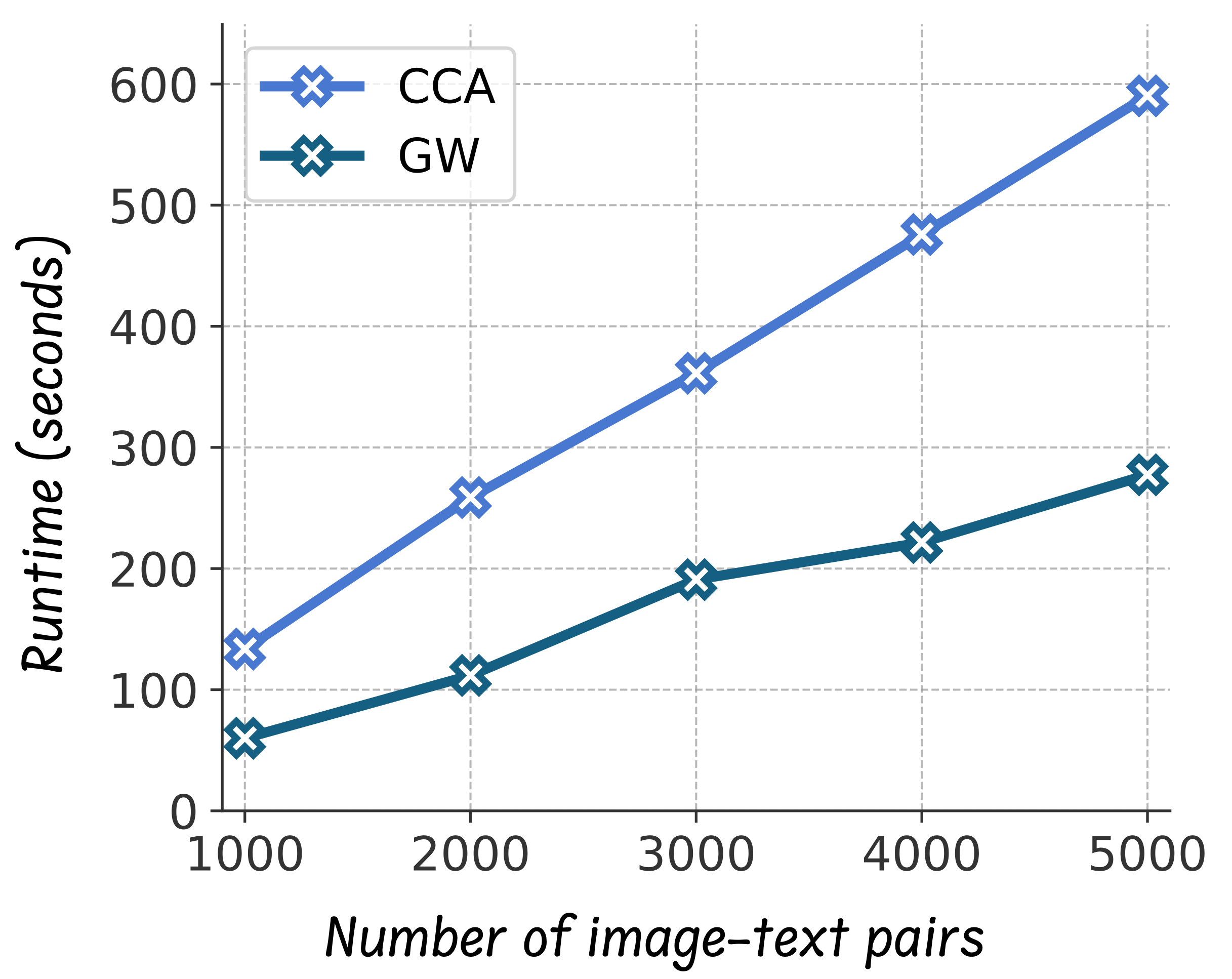}
  \vspace{-5mm}
  \caption{Scaling trend of runtime.}
  \vspace{-5mm}
  \label{fig:efficien}
\end{figure}

\noindent\textbf{Consistency of vision encoder ranking across LLMs.} Another question of interest is how consistent is the ranking of vision encoders for different LLMs, in other words, how generalizable are the choice of vision encoders across different LLMs, if one have spent resources to select the optimal vision encoder for a specific LLM, can such information be transferred to a different LLM without running vision encoder selection again? As shown in Figure~\ref{fig:qwen_vs_llama}, we find that the ranking of the resulting performances from different vision encoders is actually quite consistent across different choice of base LLM, suggesting that, the suitability of vision encoder for VLM training is potentially LLM-agnostic.

\begin{table*}[t]
\centering
\scriptsize
\setlength\tabcolsep{3pt}
\resizebox{\textwidth}{!}{
\begin{tabular}{l|c|cccccccccc}
\toprule
Method & Average & TextVQA & ScienceQA & GQA & AI2D & MMMU & MME-C & MME-P & SEED-I & POPE & RealWorldQA \\
\midrule
Worst & $60.70$ & $52.35$ & $81.44$ & $62.05$ & $63.08$ & $43.22$ & $297.50$ & $1361.82$ & $62.15$ & $85.68$ & $51.76$ \\
Accuracy & $\underline{62.71}$ & $\underline{52.64}$ & $\underline{81.58}$ & $62.05$ & $63.08$ & $\underline{43.78}$ & $308.57$ & $\mathbf{1518.69}$ & $\underline{66.87}$ & $86.49$ & $56.08$ \\
RSA & $\underline{62.71}$ & $\underline{52.64}$ & $\underline{81.58}$ & $62.05$ & $63.08$ & $\underline{43.78}$ & $308.57$ & $\mathbf{1518.69}$ & $\underline{66.87}$ & $86.49$ & $56.08$ \\
CCA & $61.36$ & $46.08$ & $79.70$ & $\underline{62.16}$ & $62.76$ & $40.89$ & $334.29$ & $1389.79$ & $66.24$ & $\underline{87.10}$ & $\underline{57.39}$ \\
MutualNN & $62.27$ & $52.46$ & $80.45$ & $60.77$ & $\underline{65.12}$ & $42.22$ & $\underline{341.07}$ & $1451.55$ & $66.25$ & $85.28$ & $54.90$ \\
\midrule
\textbf{GW} & $\mathbf{65.94}$ & $\mathbf{60.94}$ & $\mathbf{82.29}$ & $\mathbf{63.68}$ & $\mathbf{68.56}$ & $\mathbf{44.00}$ & $\mathbf{371.79}$ & $\underline{1514.35}$ & $\mathbf{70.77}$ & $\mathbf{87.40}$ & $\mathbf{59.61}$ \\
\midrule
\textcolor{gray}{Optimal} & \textcolor{gray}{$65.94$} & \textcolor{gray}{$60.94$} & \textcolor{gray}{$82.29$} & \textcolor{gray}{$63.68$} & \textcolor{gray}{$68.56$} & \textcolor{gray}{$44.00$} & \textcolor{gray}{$371.79$} & \textcolor{gray}{$1514.35$} & \textcolor{gray}{$70.77$} & \textcolor{gray}{$87.40$} & \textcolor{gray}{$59.61$}\\
\bottomrule
\end{tabular}}
    \caption{Model selection results for Qwen-2.5-7B-Instruct, pre-trained on CC3M, ``Large" group.}
\label{tab:main_cc3m_large}
\end{table*}

\begin{table*}[t]
\centering
\scriptsize
\setlength\tabcolsep{3pt}
\resizebox{\textwidth}{!}{
\begin{tabular}{l|c|cccccccccc}
\toprule
Method & Average & TextVQA & ScienceQA & GQA & AI2D & MMMU & MME-C & MME-P & SEED-I & POPE & RealWorldQA \\
\midrule
Worst & $59.90$ & $\underline{49.05}$ & $78.90$ & $58.80$ & $\underline{63.86}$ & $\underline{42.78}$ & $321.07$ & $1350.41$ & $63.15$ & $83.61$ & $51.24$ \\
Accuracy & $\mathbf{65.56}$ & $\mathbf{59.55}$ & $\mathbf{81.44}$ & $\mathbf{63.89}$ & $\mathbf{67.03}$ & $\mathbf{43.78}$ & $\mathbf{381.07}$ & $\mathbf{1521.13}$ & $\mathbf{69.91}$ & $\mathbf{87.73}$ & $\mathbf{58.56}$ \\
RSA & $59.90$ & $\underline{49.05}$ & $78.90$ & $58.80$ & $\underline{63.86}$ & $\underline{42.78}$ & $321.07$ & $1350.41$ & $63.15$ & $83.61$ & $51.24$ \\
CCA & $\underline{61.08}$ & $46.21$ & $\underline{79.72}$ & $\underline{61.98}$ & $63.24$ & $42.33$ & $\underline{322.86}$ & $\underline{1423.73}$ & $\underline{64.58}$ & $\underline{86.97}$ & $\underline{54.25}$ \\
MutualNN & $59.90$ & $\underline{49.05}$ & $78.90$ & $58.80$ & $\underline{63.86}$ & $\underline{42.78}$ & $321.07$ & $1350.41$ & $63.15$ & $83.61$ & $51.24$ \\
\midrule
\textbf{GW} & $\mathbf{65.56}$ & $\mathbf{59.55}$ & $\mathbf{81.44}$ & $\mathbf{63.89}$ & $\mathbf{67.03}$ & $\mathbf{43.78}$ & $\mathbf{381.07}$ & $\mathbf{1521.13}$ & $\mathbf{69.91}$ & $\mathbf{87.73}$ & $\mathbf{58.56}$ \\
\midrule
\textcolor{gray}{Optimal} & \textcolor{gray}{$65.56$} & \textcolor{gray}{$59.55$} & \textcolor{gray}{$81.44$} & \textcolor{gray}{$63.89$} & \textcolor{gray}{$67.03$} & \textcolor{gray}{$43.78$} & \textcolor{gray}{$381.07$} & \textcolor{gray}{$1521.13$} & \textcolor{gray}{$69.91$} & \textcolor{gray}{$87.73$} & \textcolor{gray}{$58.56$}\\
\bottomrule
\end{tabular}}
    \caption{Model selection results for Qwen-2.5-7B-Instruct, pre-trained on CC3M, ``Small" group.}
\label{tab:main_cc3m_small}
\end{table*}

\begin{figure}[t]                 
  \centering
  \includegraphics[width=\linewidth]{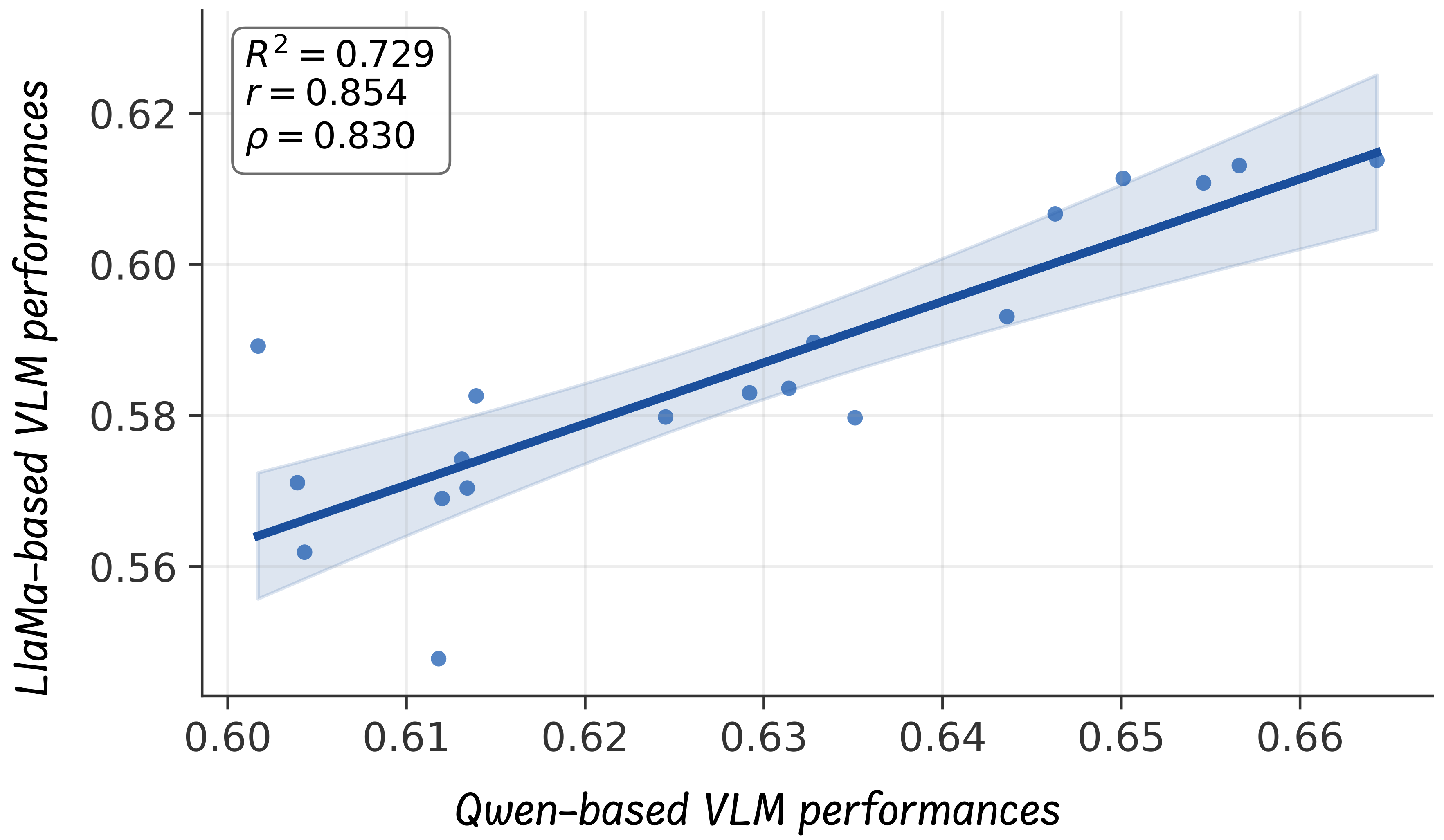}
  \vspace{-5mm}
  \caption{Correlation of the vision encoder ranking between Qwen-2.5-7B-Instruct and LLaMa-3.1-8B-instruct.}
  \label{fig:qwen_vs_llama}
  \vspace{-5mm}
\end{figure}

\noindent\textbf{Changing pre-training datasets.} To rule out the possibility that our obtained results are actually biased to the specific choice of dataset, we change the default pre-training dataset from \verb|LCS-558k| to \verb|CC3M-595k|, and report the performances on \verb|Qwen-2.5-7B-Instruct| on new dataset, summarized in Table~\ref{tab:main_cc3m_large} and Table~\ref{tab:main_cc3m_small}. As we can see from these tables, changing pre-training dataset does not significantly affect the effectiveness of GW distance, verifying that the correlation between GW distance and final VLM performances we observed is not specifically tied to the choice of dataset, rather, has generalizable pattern across different distributions. 

\section{Conclusion}

In this paper, we investigated the question of which vision encoder one should use for VLM training. We first demonstrated that commonly used heuristics, such as model size or model accuracy often fail to identify a suitable vision encoder for VLMs. This motivates a deeper re-examination of which properties of the vision encoder truly matter for successful VLMs. We hypothesize that the \textbf{\textit{compatibility}} between the vision encoder and the LLM plays a crucial yet subtle role in the success of VLMs, and we characterize this compatibility via structural similarity between their representation spaces, measured by the Gromov–Wasserstein (GW) distance. Theoretically, we show that the Lipschitz constant of the optimal cross-modal mapping can be bounded by the GW distance, implying that vision encoders with smaller GW distance to the LLM representations are inherently easier to learn. Empirically, we validate this perspective and show that GW distance serves as an effective, training-free indicator for predicting which vision encoder is suitable for a given LLM.

\section{Limitations}

As our research represents an initial step in this area, the scope of our experiments could be further expanded. For instance, incorporating a broader range of SOTA models would provide more comprehensive validation. Additionally, while the proposed concept is inherently modality-agnostic, our current empirical study focuses exclusively on image and text. Future work could extend this framework to additional modalities, and investigate how cross-modal alignment vary across different modality pairings.

\clearpage
\section*{Acknowledgments}
TLL is partially supported by the following Australian Research Council projects: FT220100318, DP260102466, DP220102121, LP220100527, LP220200949. This research
was supported (in part) by Multidisciplinary Cooperative Research Program in CCS, University of Tsukuba. This work was supported by resources provided by the Pawsey Supercomputing Research Centre’s Setonix Supercomputer (\url{https://doi.org/10.48569/18sb-8s43}), with funding from the Australian Government and the Government of Western Australia.

{
    \small
    \bibliographystyle{ieeenat_fullname}
    \bibliography{main}
}

\clearpage
\setcounter{page}{1}
\maketitlesupplementary

\section{Theoretical Analysis (Full)} \label{app:theory}

Due to space limitations, we provide some technical analysis and the full proof of Theorem 1 here. 

\subsection{Definitions}\label{app:theory:def}

Note the following definition is given in \cite{lu2023theory}, which we put here for completeness. 
\begin{definition}[Approximate realizability\cite{lu2023theory}]
    Define the `approximate realizability' of a hypothesis class $\mathcal{G}$ on a paired dataset $D = \{(\bm{x}_1,\bm{y}_1),\dots,(\bm{x}_n,\bm{y}_n)\}$ as
    \begin{align}
        \mathcal{R}(\mathcal{G},D) = \min_{g\in\mathcal{G}}\frac{1}{n}\sum_{i=1}^{n}\norm{g(\bm{x}) - \bm{y}}.
    \end{align}
\end{definition}

\subsection{Lemma 1}\label{app:theory:proof_lemma1}

\begin{lemma}\label{lemma:GW}
    For any $(\bm{x},\bm{y})$, $(\bm{x}',\bm{y}') \in \supp(\pi^*_{\infty})$, we have:
    \begin{align}
        \left|d_{\mathcal{Y}}(g^*(\bm{x}), g^*(\bm{x}')) - d_{\mathcal{\mathcal{X}}}(\bm{x},\bm{x}')\right| \leq \text{GW}_{\infty} + 2\rho^{*}_{\pi_{\infty}}.
    \end{align}
\end{lemma}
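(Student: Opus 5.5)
The argument is a two-step triangle inequality chain: first compare $d_{\mathcal{Y}}(g^*(\bm{x}),g^*(\bm{x}'))$ with $d_{\mathcal{Y}}(\bm{y},\bm{y}')$ using the worst-case Bayes error of Definition~\ref{def:bayes}, then compare $d_{\mathcal{Y}}(\bm{y},\bm{y}')$ with $d_{\mathcal{X}}(\bm{x},\bm{x}')$ using Definition~\ref{def:GW_inf}. Throughout, $\pi^*_\infty$ denotes a coupling attaining the infimum in Definition~\ref{def:GW_inf}. If the infimum is not attained, I would take a near-optimal coupling with value at most $\text{GW}_\infty+\epsilon$ and let $\epsilon\to 0$ at the end. $\rho^*_{\pi_\infty}$ is $\rho^*_\pi$ evaluated at this $\pi^*_\infty$.

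Fix $(\bm{x},\bm{y}),(\bm{x}',\bm{y}')\in\supp(\pi^*_\infty)$. Since $d_{\mathcal{Y}}$ (angular distance) is a metric on $\bm{Y}$, and $g^*$ is taken to map into this space, the triangle inequality applied twice gives
\begin{align*}
d_{\mathcal{Y}}(g^*(\bm{x}),g^*(\bm{x}')) &\le d_{\mathcal{Y}}(g^*(\bm{x}),\bm{y}) + d_{\mathcal{Y}}(\bm{y},\bm{y}') + d_{\mathcal{Y}}(\bm{y}',g^*(\bm{x}')),\\
d_{\mathcal{Y}}(\bm{y},\bm{y}') &\le d_{\mathcal{Y}}(\bm{y},g^*(\bm{x})) + d_{\mathcal{Y}}(g^*(\bm{x}),g^*(\bm{x}')) + d_{\mathcal{Y}}(g^*(\bm{x}'),\bm{y}').
\end{align*}
Both pairs lie in the support, so each of the two error terms is at most $\rho^*_{\pi_\infty}$ by the definition of the supremum. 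Combining the two inequalities yields
\[
\left|d_{\mathcal{Y}}(g^*(\bm{x}),g^*(\bm{x}'))-d_{\mathcal{Y}}(\bm{y},\bm{y}')\right|\le 2\rho^*_{\pi_\infty}.
\]

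Since $\pi^*_\infty$ is optimal for the sup-distortion objective and both pairs lie in its support,
\[
|d_{\mathcal{X}}(\bm{x},\bm{x}')-d_{\mathcal{Y}}(\bm{y},\bm{y}')|\le \text{GW}_\infty.
\]
Inserting $\pm d_{\mathcal{Y}}(\bm{y},\bm{y}')$ inside the absolute value in the statement and applying the triangle inequality on $\mathbb{R}$ combines the two bounds into $\text{GW}_\infty+2\rho^*_{\pi_\infty}$, which is the claim.

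The computation is routine. The delicate step is the bookkeeping around attainment of the optimal coupling and the meaning of $\supp$. For finite samples the transport polytope is compact and the objective takes finitely many values, so a minimizer exists. In general I would rely on Mémoli's existence result for $\text{GW}_\infty$ minimizers on compact mm-spaces, or fall back on the $\epsilon$-argument above. One also has to take care that the sup-bound holds on the whole support, not merely almost surely. The angular distance is continuous, so a bound holding $\pi\otimes\pi$-a.e. extends to the closed support.
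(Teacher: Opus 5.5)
Your proof is correct and is essentially the paper's argument. The paper proves the two one-sided inequalities separately, each time chaining the triangle inequality in $d_{\mathcal{Y}}$ through $d_{\mathcal{Y}}(\bm{y},\bm{y}')$ and then applying Definition~\ref{def:GW_inf} and Definition~\ref{def:bayes}; you run the same steps as $|d_{\mathcal{Y}}(g^*(\bm{x}),g^*(\bm{x}'))-d_{\mathcal{Y}}(\bm{y},\bm{y}')|\le 2\rho^*_{\pi_\infty}$ and $|d_{\mathcal{X}}(\bm{x},\bm{x}')-d_{\mathcal{Y}}(\bm{y},\bm{y}')|\le \text{GW}_\infty$, combined on $\mathbb{R}$.

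Your closing bookkeeping is unnecessary: the statement presupposes that $\pi^*_\infty$ exists, and both $\text{GW}_\infty$ and $\rho^*_\pi$ are defined as suprema over the support itself. The $\epsilon$-fallback would also not work as written, because $\supp(\pi_\epsilon)$ and $\rho^*_{\pi_\epsilon}$ change with $\epsilon$.
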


\begin{proof}
To prove the lemma, we need to prove \[d_{\mathcal{Y}}(g^*(\bm{x}), g^*(\bm{x}')) - d_{\mathcal{\mathcal{X}}}(\bm{x},\bm{x}') \leq \text{GW}_{\infty} + 2\rho^{*}_{\pi_{\infty}}, \quad (\text{case 1})\] \[d_{\mathcal{\mathcal{X}}}(\bm{x},\bm{x}') - d_{\mathcal{Y}}(g^*(\bm{x}), g^*(\bm{x}')) \leq \text{GW}_{\infty} + 2\rho^{*}_{\pi_{\infty}}, \quad (\text{case 2})\] respectively.

\noindent\textbf{Case 1.} 
We start by bounding $d_{\mathcal{Y}}(g^*(\bm{x}), g^*(\bm{x}'))$. By triangle inequality, we have:
\begin{align*}
    d_{\mathcal{Y}}(g^*(\bm{x}), g^*(\bm{x}')) \leq d_{\mathcal{Y}}(g^*(\bm{x}),\bm{y}) + d_{\mathcal{Y}}(\bm{y},g^*(\bm{x}')).
\end{align*}
Applying triangle inequality again on $d_{\mathcal{Y}}(g^*(\bm{x}'),\bm{y})$:
\begin{align*}
    d_{\mathcal{Y}}(g^*(\bm{x}'),\bm{y}) \leq d_{\mathcal{Y}}(\bm{y},\bm{y}') + d_{\mathcal{Y}}(g^*(\bm{x}'),\bm{y}').
\end{align*}
Putting them back together: 
\begin{align*}
    d_{\mathcal{Y}}(g^*(\bm{x})&, g^*(\bm{x}')) \leq \\ &d_{\mathcal{Y}}(g^*(\bm{x}),\bm{y}) + d_{\mathcal{Y}}(\bm{y},\bm{y}') + d_{\mathcal{Y}}(g^*(\bm{x}'),\bm{y}').
\end{align*}
Subtracting $d_{\mathcal{\mathcal{X}}}(\bm{x},\bm{x}')$ from both sides, we have: 
\begin{align*}
    &d_{\mathcal{Y}}(g^*(\bm{x}), g^*(\bm{x}')) - d_{\mathcal{\mathcal{X}}}(\bm{x},\bm{x}') \\&\leq d_{\mathcal{Y}}(g^*(\bm{x}),\bm{y}) + d_{\mathcal{Y}}(\bm{y},\bm{y}') + d_{\mathcal{Y}}(g^*(\bm{x}'),\bm{y}') - d_{\mathcal{\mathcal{X}}}(\bm{x},\bm{x}').
\end{align*}
Let $\pi^{*}_{\infty}$ be the underlying optimal coupling for the $\text{GW}_{\infty}$, then by Definition \ref{def:GW_inf} we can substitute the $\text{GW}_{\infty}$ into the inequality and obtain:
\begin{align*}
d_{\mathcal{Y}}(g^*(\bm{x})&, g^*(\bm{x}')) - d_{\mathcal{\mathcal{X}}}(\bm{x},\bm{x}') \\
     &\leq \text{GW}_{\infty} + d_{\mathcal{Y}}(g^*(\bm{x}),\bm{y}) + d_{\mathcal{Y}}(g^*(\bm{x}'),\bm{y}')\\
     &\leq \text{GW}_{\infty} + 2\rho^{*}_{\pi_{\infty}}, \quad \text{(Definition \ref{def:bayes})}
\end{align*}
which proves case 1.

\noindent\textbf{Case 2.} Proof of case 2 is similar to case 1, we start by applying triangle inequality on $d_{\mathcal{Y}}(\bm{y},\bm{y}')$ and obtain:
\begin{align*}
    d_{\mathcal{Y}}(\bm{y},\bm{y}') \leq d_{\mathcal{Y}}(\bm{y},g^*(\bm{x})) + d_{\mathcal{Y}}(g^*(\bm{x}),\bm{y}'). 
\end{align*}
Applying triangle inequality again on $d_{\mathcal{Y}}(g^*(\bm{x}),\bm{y}')$: 
\begin{align*}
    d_{\mathcal{Y}}(g^*(\bm{x}),\bm{y}') \leq d_{\mathcal{Y}}(g^*(\bm{x}), g^*(\bm{x}')) + d_{\mathcal{Y}}(g^*(\bm{x}'),\bm{y}'). 
\end{align*}
Putting them back together: 
\begin{align*}
    d_{\mathcal{Y}}(\bm{y},&\bm{y}') \leq \\ &d_{\mathcal{Y}}(\bm{y},g^*(\bm{x})) + d_{\mathcal{Y}}(g^*(\bm{x}), g^*(\bm{x}')) + d_{\mathcal{Y}}(g^*(\bm{x}'),\bm{y}').
\end{align*}
Apply Definition \ref{def:GW_inf} similarly:
\begin{align*}
    &d_\mathcal{\mathcal{X}}(\bm{x},\bm{x}') \le \text{GW}_\infty + d_{\mathcal{Y}}(\bm{y},g^*(\bm{x})) \\ & \qquad \qquad \qquad + d_{\mathcal{Y}}(g^*(\bm{x}), g^*(\bm{x}')) + d_{\mathcal{Y}}(g^*(\bm{x}'),\bm{y}'). 
\end{align*}
Subtracting $d_{\mathcal{Y}}(g^*(\bm{x}), g^*(\bm{x}'))$ from both sides, we have 
\begin{align*}
    d_{\mathcal{\mathcal{X}}}(\bm{x},\bm{x}')& - d_{\mathcal{Y}}(g^*(\bm{x}), g^*(\bm{x}')) \\ &\leq d_{\mathcal{Y}}(\bm{y},g^*(\bm{x})) + d_{\mathcal{Y}}(g^*(\bm{x}'),\bm{y}') + \text{GW}_{\infty}, \\
     &\leq \text{GW}_{\infty} + 2\rho^{*}_{\pi_{\infty}},
\end{align*}
which completes the proof. 
\end{proof}

\subsection{Proof of Theorem 1}\label{app:theory:proof_theo1}
\begin{proof}
In Lemma \ref{lemma:GW}, we have shown that the deviation of pairwise distance, caused by a mapping function $g^*$ from space $\mathcal{\mathcal{X}}$ to space $\mathcal{Y}$ can be bounded by the $\infty$-norm GW distance and the supremum of the mapping error of $g^{*}_{\pi}$. 
This can be intuitively translated to a bound on the Lipschitz-constant of the function. 
That is, by Lemma 1 we have 
\begin{align*}
     \frac{d_{\mathcal{Y}}(g^*(\bm{x}), g^{*}(\bm{x}'))}{d_{\mathcal{X}}(\bm{x},\bm{x}')} \leq \frac{2\rho^{*}_{\pi_{\infty}}+\text{GW}_{\infty}}{d_{\mathcal{X}}(\bm{x},\bm{x}')} + 1.
\end{align*}
By the definition of Lipschitz-continuity, $\sup_{x\neq x'}\frac{d_{\mathcal{Y}}(g^*(\bm{x}), g^*(\bm{x}'))}{d_{\mathcal{X}}(\bm{x},\bm{x}')}$ is exactly the Lipschitz constant of $g^*$. Taking the supremum over all pairs $x \ne x'$ on both sides and noting that $d_\mathcal{X}(\bm{x},\bm{x}') \ge r$ 
yields the upper bound on $L_{g^*}$:
\begin{align*}
     L_{g^*} &=\sup_{x\ne x'} \frac{d_{\mathcal{Y}}(g^*(\bm{x}), g^*(\bm{x}'))}{d_{\mathcal{X}}(\bm{x},\bm{x}')} \\ &\leq \sup_{x\ne x'} \left(1 + \frac{2\rho^{*}_{\pi_{\infty}}+\text{GW}_{\infty}}{d_{\mathcal{X}}(\bm{x},\bm{x}')}\right) \\ &= 1 + \frac{2\rho^{*}_{\pi_{\infty}}+\text{GW}_{\infty}}{r}.
\end{align*}
\end{proof}

\section{Related Works}
\label{app:relate}

\subsection{Performance Prediction for Model Selection.} Due to the prohibitive cost for full-training every single candidate large models, performance prediction, or transferability estimation, has become an increasing attentive area. However, most of the works in this domain focus on classification tasks only~\cite{kim2016learning,tran2019transferability,nguyen2020leep,bolya2021scalable,you2021logme,shao2022not,ding2022pactran,zhang2023model,tu2024ranked}, and are hard to naively migrated to generative tasks. In addition, the problem being studied in this paper differs fundamentally from existing works in transferability estimation in the following sense: existing works mostly consider a \textit{task-centric} perspective, that is, given the joint distribution of a target task, how do we evaluate the fitness of the model without actually fine-tuning them. Whereas the scope of this study is focus on the interplay of multi-modal representations, and how the choice of pre-train models influence it. Another recent work~\cite{zhang2025assessing} considers the similar notion of assessing the quality of vision encoder by examining their clustering quality.

\subsection{The Platonic Representation Hypothesis (full)}

However, it is crucial to note that MutualNN has several drawbacks compared to GW, particularly in the context of model selection: (i) it suffers from sensitivity to an additional hyper-parameter (top-$k$ neighbors). Since the task is training-free model selection, it is not feasible to optimize this hyper-parameter in hindsight based on post-VLM performance; (ii) MutualNN does not enjoy the theoretical properties of GW distance. While we can intuitively understand that MutualNN ``approximates" the geometrical similarity between spaces, how to translate that into a formal guarantee, and how that influences the learnability of mappings across spaces remains dubious; (iii) akin to RSA, MutualNN enforces a ``hard" correspondence between known image-text pairs. While such an assumption is reasonable (or even superior) when semantic information overlaps perfectly, it fails when modalities are complementary. If the semantic content of an image and text pair is distinct but related (non-overlapping), MutualNN incorrectly assumes the corresponding objects are affine and fails to capture the relationship. We hypothesize this is exactly why the original authors chose a relatively small $k$: they implicitly assume that for the most semantically aligned sub-regions, information overlaps rather than complements. In contrast, because GW learns the correspondence via the coupling matrix, it is inherently more robust to such complementary scenarios (as such correspondence will be down-weighted).

\section{Experimental Details}\label{app:exp_details}

We summarize the key training configurations of pre-training feature alignment (stage 1) and visual instruction tuning (stage 2) in Table~\ref{tab:train_config}.

\subsection{Training Configurations}\label{app:training_config}

\begin{table}[ht]
  \centering
  \setlength{\tabcolsep}{9pt}   
  \renewcommand{\arraystretch}{1.12} 
  \begin{tabular}{l|cc}
    \toprule
    \textbf{Configuration} & Stage-1 & Stage-2 \\
    \midrule
    learning rate & $2e-3$ & $2e-5$  \\
    learning scheduler  & cosine & cosine    \\
    warmup ratio & 0.03 & 0.03   \\
    global batch size &   256   & 128        \\
    training epoch           & 1 & 1     \\
    max sequence length      & 2048 & 2048      \\
    \bottomrule
  \end{tabular}
  \vspace{3mm}
  \caption{LLaVa-1.5 training configuration.}
  \label{tab:train_config}

\end{table}

\subsection{Details of Model Pool}\label{app:exp_model_pool}

\begin{table}[ht]
  \centering
  \setlength{\tabcolsep}{4pt}
  \renewcommand{\arraystretch}{1.1}

  \resizebox{\linewidth}{!}{%
    \begin{tabular}{lllllc}
      \toprule
    Provider & Source & Architecture & Size & ImgNet-1k\\
    \midrule
    \rowcolor{gray!15} \multicolumn{5}{l}{\textit{``Large" group: vision encoders size larger than ViT-L}} \\
    Laion & OpenCLIP~\cite{cherti2022reproducible} & \code{ViT-bigG-14} & 2540M &  80.09 \\
    BAAI & MLCD~\cite{an2024multi} & \code{ViT-bigG-14} & 1842M & - \\
    Laion & OpenCLIP~\cite{cherti2022reproducible} & \code{ViT-g-14} & 1367M &  78.47 \\
    Meta & DINO-v2~\cite{oquab2023dinov2} & \code{giant} & 1136M & - \\
    Apple & DFN~\cite{fang2023data} & \code{ViT-H-14} & 987M & 84.37 \\
    Meta & CLIP~\cite{radford2021learning} & \code{ViT-H-14} & 986M & 80.51 \\
    Laion & OpenCLIP~\cite{cherti2022reproducible} & \code{ViT-H-14} & 986M &  77.96 \\
    Google & SigLIP~\cite{zhai2023sigmoid} & \code{SoViT-400m} & 877M &  83.08 \\
    \midrule
    \rowcolor{gray!15} \multicolumn{5}{l}{\textit{``Small" group: vision encoders size smaller or equal to ViT-L}} \\
    Google & SigLIP~\cite{zhai2023sigmoid} & \code{ViT-L-16} & 652M &  82.07 \\
    Apple & DFN~\cite{fang2023data} & \code{ViT-L-14} & 428M  & 81.41 \\
    Laion & OpenCLIP~\cite{cherti2022reproducible} & \code{ViT-L-14} & 428M & 79.21 \\
    OpenAI & CLIP~\cite{radford2021learning} & \code{ViT-L-14} & 428M & 76.56 \\
    Laion & OpenCLIP~\cite{cherti2022reproducible} & \code{ViT-L-14} & 428M & 75.25 \\
    Meta & DINO-v2 & \code{Large} & 304M &  - \\
    Google & SigLIP~\cite{zhai2023sigmoid} & \code{ViT-B-16} & 203M & 78.49 \\
    OpenAI & CLIP~\cite{radford2021learning} & \code{ViT-B-16} & 150M & 68.34 \\
    Meta & MetaCLIP~\cite{xu2023demystifying} & \code{ViT-B-16} & 149M &  72.12 \\
    Meta & DINO-v2~\cite{oquab2023dinov2} & \code{base} & 87M &  - \\

      \bottomrule
    \end{tabular}%
  }
  \cprotect\caption{Collection of vision-encoders in our experiments.}
  \label{tab:model-summary}
\end{table}

We summarized all vision encoders used in this study in Table~\ref{tab:model-summary}, within each group, vision encoders are ordered by their size, models with similar size are ordered by accuracy. We try to make the collection as diverse as possible, covering models from different providers and different architectures. Any entry shown as ``-" in Table~\ref{tab:model-summary} means zero-shot accuracy cannot be naively computed (DINO family), or there is no open reported zero-shot accuracy and paired text encoder is not found (MLCD).

\subsection{Implementation Details of Baselines}

\paragraph{RSA.} For the implementation of Representational Similarity Analysis (RSA)~\cite{kriegeskorte2008representational}, we use 1,000 randomly sampled image-text pairs to estimate the within-space pairwise similarity matrices. We employ cosine similarity as the comparison metric.

\paragraph{CCA.} We implement Canonical Correlation Analysis (CCA)~\cite{morcos2018insights} using scikit-learn~\cite{scikit-learn}. We increase the sample size to 5,000 pairs to ensure it exceeds the maximum feature dimension of either modality. We set the number of components to 10, as higher values yielded no significant changes in model ranking. The optimization is set to a maximum of 500 iterations with a tolerance of $10^{-6}$, following scikit-learn defaults.

\paragraph{MutualNN.} For MutualNN~\cite{huh2024platonic}, we use 1,000 randomly sampled pairs and set the number of neighbors to $k=10$, following the official implementation. To align with our GW distance setup, we treat the image modality as the source domain, calculating overlap based on the nearest neighbors with respect to the image features.

\end{document}